\documentclass{arxiv} 



\title[Optimization dependent generalization bound for ReLU networks]{Optimization dependent generalization bound for ReLU networks based on sensitivity in the tangent bundle}



\optauthor{%
 \Name{Dániel Rácz} \Email{racz.daniel@sztaki.hu}\\
 \addr Institute for Computer Science and Control (SZTAKI) \& Eötvös Loránd University, Budapest, Hungary 
 \AND
 \Name{Mihály Petreczky} \Email{mihaly.petrecszky@univ-lille.fr}\\
 \addr CNRS CRIStAL, Université de Lille, Lille, France%
 \AND
 \Name{András Csertán} \Email{csertan.andras@sztaki.hu}\\
 \addr Institute for Computer Science and Control (SZTAKI) \& Eötvös Loránd University, Budapest, Hungary%
 \AND
 \Name{Bálint Daróczy} \Email{daroczyb@ilab.sztaki.hu}\\
 \addr Institute for Computer Science and Control (SZTAKI), Budapest, Hungary \& Széchenyi István University, Győr, Hungary
}

\newtheorem{assumption}[theorem]{Assumption}

\newcommand{\norm}[1]{\left\lVert#1\right\rVert}
\newcommand{\x}{\mathbf{x}}
\newcommand{\z}{z}
\newcommand{\vv}{\mathbf{v}}

\begin{document}

\maketitle

\begin{abstract}%
Recent advances in deep learning have given us some very
promising results on the generalization ability of deep neural networks, however
literature still lacks a comprehensive theory explaining why heavily
over-parametrized models are able to generalize well while fitting the training
data.  In this paper we propose a PAC type bound on the generalization error of
feedforward ReLU networks via estimating the Rademacher complexity of the set of
networks available from an initial parameter vector via gradient descent. The
key idea is to bound the sensitivity of the network's gradient to perturbation
of the input data along the optimization trajectory. The obtained bound does
not explicitly depend on the depth of the network. Our results are
experimentally verified on the MNIST and CIFAR-10 datasets. 
\end{abstract}


\section{Introduction and related work}
Deep learning has started soaring in popularity during the last decade and by
today it has reached unprecedented heights in terms of practical usage as well
as theoretical research. As higher computational capacity is also becoming
easier to access, the complexity and size of deep neural networks being used by
the community is also dramatically increasing. The number of parameters
contained in such networks are usually much higher than the number of training
data points or the dimension of the training data. As a result, these models
tend to easily interpolate the training data and according to the "classical"
theory of machine learning (see e.g. \cite{vapnik1999nature}) they should lead
to overfitting. However, it has been shown \cite{zhang2021understanding}
empirically that this is not the case, i.e. highly over-parametrized models
trained by gradient descent are capable of generalizing well while fitting the
training data. 
This phenomenon drove the theoretical research of deep learning towards
examining over-parametrized models. A big push has been given by the discovery of
the Neural Tangent Kernel (NTK, \cite{jacot2018neural}), leading to several
convergence theorems in the infinitely wide regime
\cite{du2018gradient,allen2019convergence,liu2022loss} and shedding light on the
connection of over-parametrized networks and kernel machines
(see \cite{belkin2021fit} for a comprehensive overview). While some interesting 
related
results
\cite{bowman2022spectral,radhakrishnan2022wide,li2023statistical,oymak2019generalization,vakili2021uniform} 
exist, it is not
entirely clear how the properties of the NTK might explain generalization. 

Deep networks are almost always trained using some form of the gradient descent
algorithm which seems to contain a so-called implicit bias
 in case of over-parametrized neural networks, i.e.
finding well generalizing optima despite interpolating the training data.
 The conditions and properties of implicit bias
is still under heavy research \cite{vardi2023implicit}. In
\cite{frei2023double} the authors show that under special conditions
on the data, gradient descent finds a solution which generalizes well, but the resulting model is
sensitive to adversarial perturbations. Analyzing the trajectories of gradient
descent has also been present in the literature
\cite{zou2020gradient,NIPS2015_5797}.

One of the standard methods to obtain generalization bounds for models in
statistical learning is the Probably Approximately Correct (PAC) framework \cite{mcallester1999pac}, which was
applied to deep networks in \cite{langford2001not} and later in
\cite{dziugaite2017computing,neyshabur2017exploring}. These papers establish
PAC-Bayesian bounds on the generalization error based on the estimation of the KL
divergence of the predictor w.r.t. some prior distribution of the parameters.
This was further developed in \cite{neyshabur2017pac} resulting in a bound
depending on the norm of the weights and (implicitly) on the depth of the
network. PAC bounds on the generalization error are closely connected to
 bounding the Rademacher complexity.
 In \cite{dziugaite2017computing} and \cite{bartlett2017spectrally}
 the authors also exploited some bounds on the Rademacher complexity of the underlying family of functions represented
by ReLU networks. Several other bounds on the Rademacher complexity was derived in 
\cite{golowich2018size} depending on yet again the norm of the weight matrices
and the width and depth of the network. In \cite{shilton2023gradient} a bound on
the Rademacher complexity is achieved by defining the
learning problem in a Reproducing Kernel Banach Space (RKBS) under some
conditions on the learning algorithm. Generalization bounds for deep
convolutional networks and Graph Neural Networks have been established in 
\cite{liao2020pac} and \cite{long2019generalization}, respectively.

\subsection{Our contribution}

Informally, our main result is an upper bound on the generalization error of
feedforward ReLU networks trained with gradient descent under certain conditions,
depending on the optimization path. Before starting our
journey to precisely state our theorem, we give a brief overview of the most
important ingredients of our concept in order to make the rest of the article
more traceable.
\begin{itemize}
  \item We will make use of a classical PAC inequality for the generalization
  error which depends on the Rademacher complexity of the loss values
   on some (test) sample.
  \item The key step is to upper bound this Rademacher complexity
   of the family of functions represented by ReLU
  networks by examining
  the network's behavior along the optimization trajectory
   from the perspective of sensitivity to perturbation of
   the input data.
  \item The basic idea behind this sensitivity measure is the following. We look
  at the gradient of the network function w.r.t. the parameters as a feature map
  defined on the input data. What happens to this feature representation if the
  input data is perturbed by some Gaussian noise?
  \item We reinforce our theoretical bound by performing experiments on the
  MNIST
   \cite{lecun1998gradient} and
  \\CIFAR-10 \cite{krizhevsky2009learning} datasets.
  \footnote{The implementation is available at \url{https://github.com/danielracz/tansens_public}}
\end{itemize}
The idea of measuring the change in the network's gradients caused by Gaussian
perturbation of the input data has been introduced in
\cite{daroczy2022gaussian}, where it was empirically shown that in case of the
task of classification, the resulting measure
correlates with the generalization gap of the network and can be used to
estimate the test loss without making use of the labels of the test data.
However, previously there was no theoretical connection known to us 
explaining this phenomenon. 

Exploiting the representation of the data via the gradient of the network as a
feature map is one of the underlying ideas of the Neural Tangent Kernel and has
been seen before (e.g. \cite{cao2019generalization,liu2023emergence}).
In \cite{racz2021gradient} such representation is used to induce a similarity
function on the data. While the gradient is usually constant over training
\cite{liu2023emergence} under heavy over-parametrization, we suspect it has a vital connection to the
generalization ability of the network in both the finite and infinite case.

\section{Problem setup}
\subsection{Notations}
Let us consider the framework of Empirical Risk Minimization over the task of
binary classification, i.e. we are given a finite set of training data
 $D = \{(\x_i,y_i) ; i=\{1,\dots,n\}\}$ drawn from a probability distribution
 $\mathcal{X} \times \mathcal{Y}$ on $\mathbb{R}^{n_{in}} \times \{-1,1\}$.
 Let $\mathcal{L}_{emp}^{D}(f) = \frac{1}{n}\sum_{i=1}^{n} l(f(\x_i),y_i)$ denote the empirical
loss we want to minimize defined over a class of functions $\mathcal{F}$. We
denote the true error by $\mathcal{L}(f) =
\mathbf{E}_{(\x,y)\text{$\sim$}\mathcal{X} \times \mathcal{Y}}[l(f(\x), y]$.
The generalization error or gap of a model $f \in \mathcal{F}$ is defined as 
$|\mathcal{L}_{emp}^{D}(f) - \mathcal{L}(f)|$. In practice, we can approximate the
generalization gap by the empirical generalization gap, i.e. the loss difference
on the training data and some test data (see \cite{devroye2013probabilistic}).

Let the function class $\mathcal{F}$ we would like to optimize over be a family
of ReLU networks characterized by a parameter vector $\theta \in \mathbb{R}^{P}$.
We will treat such networks as a function of both the input data
and 
the parameter vectors denoted by
 $f: \mathbb{R}^{P} \times \mathbb{R}^{n_{in}} \rightarrow \mathbb{R}^{n_{out}}$,
where $P$ is the number of parameters and $n_{in}$ is the input dimension. As we
are dealing with the binary classification task we have $n_{out} = 1$.
Such models are usually trained by using the gradient descent algorithm from the
initial point $\theta_0$ defined recursively at time $T$ as
$\theta_T = \theta_{T-1} - \eta(T) \nabla_{\theta}
\mathcal{L}_{emp}^{D}(f(\theta_{T-1},\cdot))$, where $\eta(T) \in
\mathbb{R}^{+}$ is the learning rate at time $T$. We follow the convention of 
$\text{ReLU}'(0) = 0$. For a fixed choice of the
learning rate function $\eta: \mathbb{N} \rightarrow \mathbb{R}^{+}$ we say 
$\theta = GD(\theta_0, \eta, T)$, if $\theta$ is the output of the gradient
descent after $T$ steps initialized in $\theta_0$ and run with the choice
of $\eta$ as the learning rate. Let $Traj(\theta_0)$ denote the set of
parameter vectors $\theta$ for which there exists $\eta$ and $T$ such that
 $\theta = GD(\theta_0, \eta, T)$.
\subsection{Tangent Sensitivity}
The central definition of our paper is called Tangent Sensitivity. Initially it
was defined in \cite{daroczy2022gaussian} motivated by the following.
Consider a small enough Gaussian perturbation around
$\x \sim \mathcal{X}$ with $\phi(\x) = \x + \delta(\x)$ where
$\delta(\x) \sim
\mathcal{N}(0,\sigma\mathbf{I})$ is a random variable.
For the expected change in the gradient mapping defined as 
$\x \rightarrow \nabla_\theta f(\theta, \x)$ on the input space we have
\begin{align*}
 \mathbf{E}_{\delta(\x)} [\left \| \nabla_\theta f(\theta, \x)  - \nabla_\theta
  f(\theta, \phi(\x)) \right \|_2^2 ] \sim \mathbf{E}_{\delta(\x)} \left
   [ \left \|
  \frac{\partial{\nabla_\theta f(\theta, \x)}}{\partial{x}} \delta(\x)
   \right \|_2^2 \right ]
   \leq \sigma \left \|
   \frac{\partial{\nabla_\theta f(\theta, \x)}}{\partial{x}}\right \|_2^2.
\end{align*}
The first approximation is based on the Taylor expansion of the gradient mapping
and scales with the variance $\sigma$ of the Gaussian noise. Hence the next
definition.
\begin{definition}
The Tangent Sample Sensitivity of a feedforward network $f$ 
  with output in $\mathbb{R}$ at input $x \in \mathbb{R}^{n_{in}}$
  is a $P \times n_{in}$ dimensional matrix,
  $S(\theta, \x) := \frac{\partial\nabla_\theta f}{\partial x}(\theta, \x)
  = \frac{\partial^2{f}}{ \partial{x}\partial{\theta}} 
  (\theta, \x)$.
  The Tangent Sensitivity is the expectation of
  tangent sample sensitivity, i.e. $S(\theta):=\mathbf{E}_{\x \sim \mathcal{X}}[S(\theta, \x)]$.
\end{definition}
Among other interesting properties it is empirically shown in
 \cite{daroczy2022gaussian} that the Frobenius norm of the Tangent Sensitivity
 matrix has a close relationship to the generalization error of the network.
 Theoretical explanation of this phenomenon has been unknown to us in the
 literature, thus to address this experience,
  we will establish a PAC bound on the generalization
 gap in which the norm of the Tangent Sensitivity appears. \\
 During the rest of the paper we will abuse the naming convention and shorten
 Tangent Sample Sensitivity to Tangent Sensitivity in some cases.
\section{PAC bound}
Our goal now is to state our main theorem. First we need to introduce a series
of assumptions.
\begin{assumption}
  \label{ass:1}
  The loss function
   $ l$ has the form $ l(f(\x),y) = \ell(f(\x) - y)$, where
   $\ell$ is $K_{\mathcal{L}}$-Lipschitz.
\end{assumption}
\noindent This is a mild assumption as most of the standard loss functions are Lipschitz
on a bounded domain.
\begin{assumption}
  \label{ass:2}
  For a fixed $\theta_0 \in \mathbb{R}^{P}$ and $\varepsilon > 0$ let
  $U_{\theta_0, \varepsilon} = Traj(\theta_0) \cap B_{\varepsilon}(\theta_0)$,
  where $B_{\varepsilon}(\theta_0)$ is the $\varepsilon$-ball around $\theta_0$.
  We assume that for any $\theta \in U_{\theta_0, \varepsilon}$ the Frobenius
  norm of the Tangent Sensitivity is bounded on the training set,
  i.e. $\sup\limits_{\x \sim D}
  \norm{S(\theta, \x)}_{F} \leq C_{TS}$ for some $C_{TS} > 0$.
\end{assumption}
\noindent Empirical evidence suggests that this is a reasonable assumption around
an initialization point $\theta_0$, which in practice usually contains an
optimum.
In light of Assumption \ref{ass:2}  we define the family of models 
$\mathcal{F}_{\theta_0, C, \varepsilon} := \left \{ f(\theta, \cdot) \text{ }
\middle |
\text{ } \theta \in
U_{\theta_0, \varepsilon} \text{ , } \sup\limits_{\x \sim D}\norm{S(\theta,
 \x)}_{F} \leq C \right \}$.
 \begin{assumption}
  \label{ass:3}
  We assume the following upper bounds hold:
       $ \sup\limits_{x \sim \mathcal{X}} |f(\theta_0, \cdot)| \leq K_{\theta_0}$, \\
       $\sup\limits_{x \sim \mathcal{X}} \norm{\nabla_{\theta}f(\theta_0, \x)}
        \leq K_{\nabla_0}$
       and 
      $\sup\limits_{x \sim \mathcal{X}} \norm{\x} \leq K_x$.
  \end{assumption}
\noindent Note, that the these assumptions are widely applied in the literature.
The first two refer to the boundedness of the function and its gradient
around the initialization, while the third one assumes the input is bounded.
%
\begin{theorem}
  \label{theorem:main}
  Consider the problem setup from Section 2 and let Assumption
   \ref{ass:1} - \ref{ass:3} hold for a fixed initialization $\theta_0$ and
   $\varepsilon > 0$. Furthermore let us suppose for all $T \in \mathbb{N}$
    that along all gradient descent
   trajectories of length $T$ starting from $\theta_0$ the quantity
   $\sum\limits_{t=1}^{T - 1} \eta(t)
   \frac{1}{n}\sum\limits_{i=1}^{n}\frac{\partial l}{\partial f}(\theta_{t-1}, \x_i)$
    is upper bounded by a positive constant $C_{GD}$. Then
    for any $\delta \in ]0,1[$ with probability at least $1 - \delta$ over the
    random sample $S$ we have
 \begin{align*}
  \forall f(\theta, \cdot ) \in \mathcal{F}_{\theta_{0}, C_{TS}, \varepsilon}:
  \mathcal{L}(f) - \mathcal{L}_{emp}^{S}(f)
   \leq K_{\mathcal{L}} K_{\theta_0} + \frac{C_1}{\sqrt{N}} + K_{\mathcal{L}} H(\theta) 
   + B \sqrt{\frac{2 log (\frac{4}{\delta})}{N}},
\end{align*}
where $C_1 = 2K_{\mathcal{L}}K_{x}K_{\nabla_0}C_{TS}C_{GD}$ and
 $H(\theta)$ is an error term and $N$ denotes
the size of $S$. The constant
$B$ is an upper bound on the loss $l(\cdot, \cdot)$.
Additionally, along with a properly scaled ReLU activation if the width of the
 network tends to infinity, $H(\theta)$ tends to zero.
\end{theorem}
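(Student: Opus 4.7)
The plan is to reduce the generalisation gap to a Rademacher complexity of $\mathcal{F}_{\theta_0,C_{TS},\varepsilon}$ and then to estimate that complexity by expanding every $f(\theta,\cdot)$ along its own gradient descent trajectory and substituting the tangent sensitivity. The starting point is the standard McDiarmid/Rademacher generalisation inequality for a loss bounded by $B$: with probability at least $1-\delta/2$ over $S$, uniformly in $f\in\mathcal{F}_{\theta_0,C_{TS},\varepsilon}$,
\[
\mathcal{L}(f)-\mathcal{L}_{emp}^S(f) \leq 2\mathcal{R}_N(\ell\circ \mathcal{F}_{\theta_0,C_{TS},\varepsilon}) + B\sqrt{\frac{2\log(4/\delta)}{N}},
\]
and the Ledoux--Talagrand contraction for the $K_{\mathcal{L}}$-Lipschitz loss $\ell$ gives $\mathcal{R}_N(\ell\circ\mathcal{F}) \leq K_{\mathcal{L}}\mathcal{R}_N(\mathcal{F})$, so it is enough to bound $\mathcal{R}_N(\mathcal{F}_{\theta_0,C_{TS},\varepsilon})$. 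To isolate the trainable part of the model I would shift by $f(\theta_0,\cdot)$ and write $f(\theta,x)=f(\theta_0,x)+g(\theta,x)$, at the price of the $\theta$-independent offset $f(\theta_0,\cdot)$; via $|f(\theta_0,x)|\leq K_{\theta_0}$ and Lipschitzness of $\ell$ its worst-case effect on the loss is at most $K_{\mathcal{L}}K_{\theta_0}$, yielding the first summand of the theorem.

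The heart of the argument is the bound $\mathcal{R}_N(\{g(\theta,\cdot)\}) \leq C_1/(2K_{\mathcal{L}}\sqrt{N})+H(\theta)$. Telescoping $g(\theta_T,x)=\sum_t [f(\theta_t,x)-f(\theta_{t-1},x)]$, taking a first-order Taylor expansion in $\theta$ at each step, inserting the update
\[
\theta_t-\theta_{t-1} = -\eta(t)\,\frac{1}{n}\sum_{i=1}^n \frac{\partial l}{\partial f}(\theta_{t-1},x_i)\,\nabla_\theta f(\theta_{t-1},x_i),
\]
and further linearising $\nabla_\theta f(\theta_{t-1},\cdot)$ around $\theta_0$ yields
\[
g(\theta,x) = -\sum_{t=1}^{T-1}\eta(t)\frac{1}{n}\sum_i \frac{\partial l}{\partial f}(\theta_{t-1},x_i)\,\langle \nabla_\theta f(\theta_0,x),\nabla_\theta f(\theta_0,x_i)\rangle + H(\theta,x),
\]
with $H$ absorbing both the $\theta$-wise Taylor remainder of $f$ and the trajectory drift of $\nabla_\theta f$. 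Fed into $\mathcal{R}_N(\{g\})=\mathbf{E}_\sigma\sup_\theta N^{-1}\sum_j \sigma_j g(\theta,x_j)$ and reorganised by bilinearity, the leading piece becomes $\langle A_\sigma, B_\theta\rangle$ with
\[
A_\sigma := \frac{1}{N}\sum_{j=1}^N \sigma_j\,\nabla_\theta f(\theta_0,x_j),\quad B_\theta := -\sum_t \eta(t)\frac{1}{n}\sum_i \frac{\partial l}{\partial f}(\theta_{t-1},x_i)\,\nabla_\theta f(\theta_0,x_i),
\]
the first depending only on the Rademacher signs and the sample, the second only on the GD trajectory.

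A Cauchy--Schwarz split reduces the supremum to $\|A_\sigma\|\cdot\sup_\theta\|B_\theta\|$. For $\|B_\theta\|$ I would use $\|\nabla_\theta f(\theta_0,x_i)\|\leq K_{\nabla_0}$ together with the hypothesis $\sum_t \eta(t)\frac{1}{n}\sum_i \frac{\partial l}{\partial f}(\theta_{t-1},x_i)\leq C_{GD}$ to obtain $\|B_\theta\|\leq K_{\nabla_0}C_{GD}$. For $\|A_\sigma\|$ the tangent sensitivity enters through the Taylor identity $\nabla_\theta f(\theta_0,x)=\int_0^1 S(\theta_0,sx)\,x\,ds$ (valid for biasless ReLU networks, since then $f(\theta,0)=0$ and hence $\nabla_\theta f(\theta,0)=0$), which combined with $\|S(\theta_0,\cdot)\|_F\leq C_{TS}$ and $\|x\|\leq K_x$ gives the pointwise bound $\|\nabla_\theta f(\theta_0,x_j)\|\leq K_x C_{TS}$; the Khintchine estimate $\mathbf{E}_\sigma\|N^{-1}\sum_j \sigma_j v_j\|\leq \max_j\|v_j\|/\sqrt{N}$ then yields $\mathbf{E}_\sigma\|A_\sigma\|\leq K_x C_{TS}/\sqrt{N}$. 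Multiplying and folding in the outer factor $2K_{\mathcal{L}}$ from the Rademacher step recovers precisely $C_1=2K_{\mathcal{L}}K_x K_{\nabla_0} C_{TS} C_{GD}$; the residual $H$ passes through the supremum and contributes the $K_{\mathcal{L}}H(\theta)$ term.

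The hard part will be the honest control of the remainder $H(\theta,x)$: it involves the $\theta$-wise Hessian remainder of $f$ and the drift $\nabla_\theta f(\theta_t,\cdot)-\nabla_\theta f(\theta_0,\cdot)$ along the trajectory, and the last sentence of the theorem requires invoking the standard NTK/lazy-training argument that, for properly scaled ReLU networks, both residuals vanish uniformly on bounded $\theta$-balls as the width tends to infinity; for ReLU this in turn rests on activation-pattern stability along the trajectory. A secondary technical point is the sign of $\partial l/\partial f$: the hypothesis $C_{GD}$ is a signed sum, so to convert it into a norm bound on $B_\theta$ one either verifies that the loss derivative keeps a constant sign along training (as is the case for standard classification losses) or partitions the trajectory into sign-stable segments. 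Once these two points are settled the remaining steps are routine concentration arguments, and combining everything with an extra $\delta/2$ budget for the Rademacher concentration gives the stated $1-\delta$ bound.
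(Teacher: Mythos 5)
Your proposal follows the same skeleton as the paper's proof: the Shalev-Shwartz--Ben-David Rademacher generalization bound, contraction through the $K_{\mathcal{L}}$-Lipschitz loss, linearization of $f(\theta,\cdot)$ around $\theta_0$ with the exact GD increments substituted for $\theta-\theta_0$, reduction of the leading term to a linear predictor $\langle w_\theta,\nabla_\theta f(\theta_0,\cdot)\rangle$, and the standard linear-class Rademacher bound (Lemma 26.10 in the reference) to get the $1/\sqrt{N}$ rate. The one genuine deviation is \emph{where} you invoke the tangent-sensitivity identity $\nabla_\theta f(\theta,\x)=S(\theta,\x)\x$. The paper applies it to the \emph{training-point} gradients along the trajectory, bounding $\norm{w_\theta}\le C_{TS}K_xC_{GD}$ via Assumption \ref{ass:2} (which is a supremum over the training set $D$ and over $\theta\in U_{\theta_0,\varepsilon}$), and then uses $K_{\nabla_0}$ from Assumption \ref{ass:3} for the test-sample vectors $\nabla_\theta f(\theta_0,\z_j)$. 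You swap these roles: you bound the trajectory part by $K_{\nabla_0}C_{GD}$ and put $C_{TS}K_x$ on the test-sample gradients via $\norm{S(\theta_0,s\x_j)}_F\le C_{TS}$. This yields the same constant $C_1$, but it is not licensed by Assumption \ref{ass:2} as stated, which only controls $\norm{S(\theta,\x)}_F$ for $\x$ in the training set, not at the test points $\z_j$; the paper's arrangement is the one that stays inside the stated hypotheses, and you should adopt it (use $K_{\nabla_0}$ for $A_\sigma$ and $C_{TS}K_x$ inside $B_\theta$, which also requires keeping $S(\theta_{t-1},\x_i)$ at the trajectory points rather than linearizing to $\theta_0$). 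A second, smaller difference: by telescoping and re-linearizing $\nabla_\theta f(\theta_{t-1},\cdot)$ around $\theta_0$ you push the gradient drift into $H$, whereas the paper Taylor-expands $f$ only once and keeps the GD increments exact, so its remainder is purely the second-order term controlled by the Hessian (which is what makes the width-to-infinity claim a direct citation of the NTK literature). Your observation that the hypothesis on $\sum_t\eta(t)\frac{1}{n}\sum_i\frac{\partial l}{\partial f}$ is a signed sum, and that converting it into a norm bound on $w_\theta$ needs a sign-stability or absolute-value reading, is a fair point that the paper itself passes over silently.
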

\begin{figure}
\centering
\includegraphics[width=\textwidth]{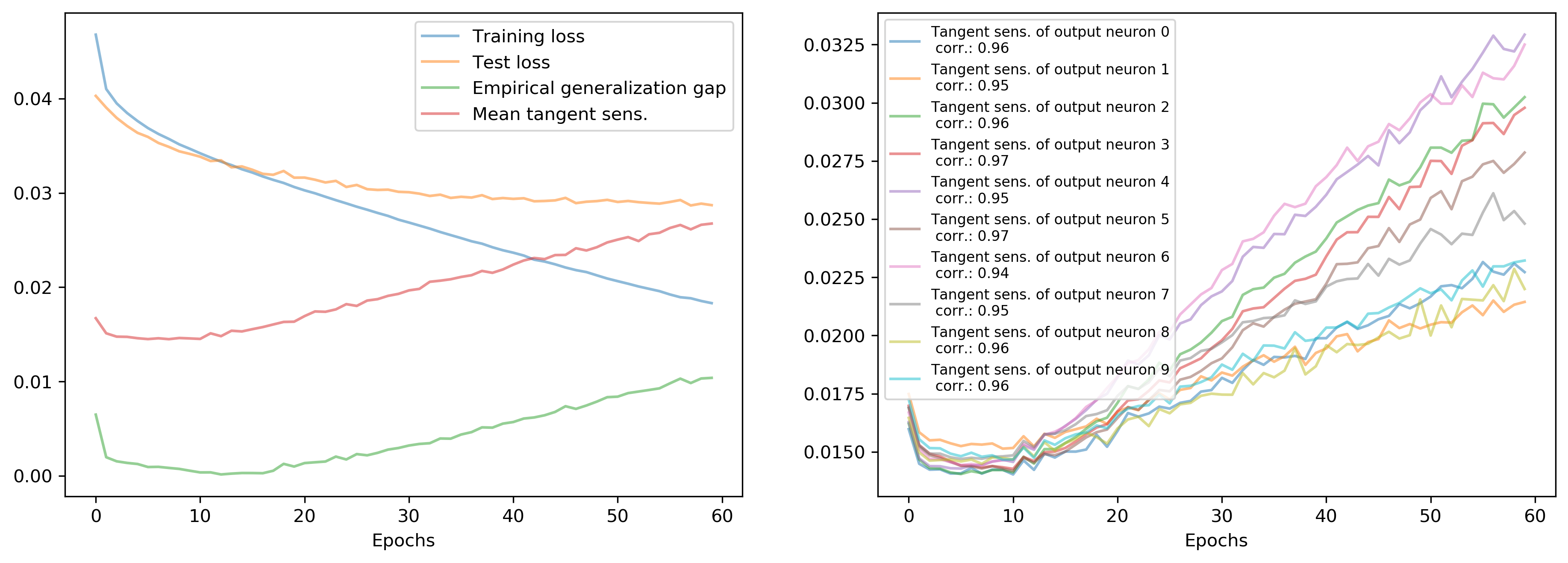}
\caption[]{Pearson correlation\protect\footnotemark{} between the empirical
 generalization gap and the
average norm of the Tangent Sensitivity on the test dataset
for a $3 \times 3000$-wide fully connected ReLU net trained on CIFAR-10.
The y-axes are based on the actual loss values.
The norm values of tangent sensitivity were linearly scaled for presentational
purposes. 
For more details and more experiments see Appendix \ref{appendix:C}.}
\label{fig:wide}
\end{figure}
There are two critical terms in the bound of Theorem  \ref{theorem:main}, namely
$C_{TS}C_{GD}$ and $H(\theta)$. The former one highlights the connection of the
generalization ability of the network and the Tangent Sensitivity along the
optimization trajectory and it originates from the estimation of the Rademacher
complexity (Definition \ref{defiradem} in Appendix \ref{appendix:A}) of
the network on the sample $S$. While currently  we do not have a satisfying
theoretical guarantee on the value of $C_{TS}$, we have
strong empirical evidence that the norm of the Tangent Sensitivity 
is indeed correlated to the empirical generalization gap, see
Fig.~\ref{fig:wide}.
\footnotetext{\url{https://docs.scipy.org/doc/scipy/reference/generated/scipy.stats.pearsonr.html}}
 For more details on our
experiments see Appendix \ref{appendix:C}.

The second term comes from the well known Taylor-approximation of the network and
it is proportional to the norm of the Hessian of the network function. The main
intuition behind the proof lies
 in the following possible approximation
of a ReLU network. If $\theta = \theta_T$, then
\begin{align*}
      f(\theta_T, \x) 
        &= f(\theta_0, \x) - \nabla_{\theta}f(\theta_0, \x)^T
      \left (\sum\limits_{t = 1}^{T - 1}\eta(t)
        \frac{1}{n}\sum\limits_{i=1}^{n}\frac{\partial l}{\partial f}
        S(\theta_{t - 1}, \x_i)\x_i \right )
        + h(\theta_T, \x),
\end{align*}
where $h$ is an error term (see Appendix \ref{appendix:B}) which determines
the term
 $H(\theta)$ in
Theorem \ref{theorem:main}.  Let 
 $w_{\theta} = \sum\limits_{t = 1}^{T - 1}\eta(t)
        \frac{1}{n}\sum\limits_{i=1}^{n}\frac{\partial l}{\partial f}
        S(\theta_{t - 1}, \x_i)\x_i $.
Because $w_{\theta}$ depends only on the training set and
the optimization path, but not the actual input $\x$, we can approximate
 a ReLU network by the scalar product $<w_{\theta},\nabla_\theta f(\theta_0, \x)>$ and bound the norm of
 $w_\theta$  thanks to the various assumptions.
Finally, we can apply the standard techiniques for bounding the Rademacher
complexity of linear classifiers.
For a
complete proof of Theorem \ref{theorem:main} see Appendix \ref{appendix:B}.

\section{Discussion and future work}

In this paper we have established a PAC bound on the generalization error of 
feedforward ReLU networks, crucially depending on the Tangent Sensitivity along
the optimization trajectory. Empirical evidence has previously shown the
correlation between the two quantities, we believe the obtained bound provides
a strong theoretical justification. While the established bound is not tight,
we believe that the sensitivity measure might have a connection to
the smoothness of the function in some appropriate function space
\cite{belkin2021fit},
  which is a promising direction for the generalization theory of deep networks. 
The straightforward next step seems to be the convergence analysis of the
 Tangent Sensitivity matrix and its norm around the initialization and optima.
  An interesting idea would be to incorporate it in the loss function, as presented
in \cite{rivasplata2019pac}, however calculating the Tangent Sensitivity norm is
 computational expensive, thus it requires to find an efficient approximation.
In the infinite width limit, Tangent Sensitivity can be viewed as a partial
derivative-like object of the NTK w.r.t. the input data. It would be interesting
to examine the connection to the generalization ability of the NTK Kernel
Machine.

\section{Acknowledgement}
This research was supported by the European Union project RRF-2.3.1-21-2022-00004 within the framework of the Artificial Intelligence National Laboratory and by the C.N.R.S. E.A.I.  project "Stabilité des algorithmes d'apprentissage pour les réseaux de neurones profonds et récurrents en utilisant la géométrie et la théorie du contrôle via la compréhension du rôle de la surparamétrisation". B.D. was supported by MTA Premium Postdoctoral Grant 2018. 


\newpage
\bibliography{tangent}
\newpage
\clearpage
\appendix

\section{Rademacher complexity}
\label{appendix:A}
\begin{definition}(e.g. see definition 26.1 in \cite{shalev2014understanding})
  \label{defiradem}
  The Rademacher complexity of a bounded set
   $\mathcal{A} \subset \mathbb{R}^{m}$ of vectors is defined as
      $R(\mathcal{A}) = \mathbb{E}_{\mathbf{\sigma}}\Bigg[\sup_{a \in \mathcal{A}}
      \frac{1}{m} \sum\limits_{i = 1}^{m}\sigma_i a_i \Bigg]$,
  where the random variables $\mathbf{\sigma}_i$ are i.i.d such that 
  $\mathbb{P}[\sigma_i = 1]  = \mathbb{P}[\sigma = -1] = 0.5$. The Rademacher complexity of a set of functions $\mathcal{F}$ over a set of
  samples $S = \{s_1, \dots, s_m\}$ is defined as
$      R_{S}(\mathcal{F}) = R(\left\{(f(s_1),\dots,f(s_m)) \middle
       | f \in \mathcal{F} \right\})$.
\end{definition}

\section{Proof of Theorem \ref{theorem:main}}
\label{appendix:B}
In order to prove Theorem \ref{theorem:main} we will apply the following
general PAC bound to our situation.
\begin{theorem}[e.g. see Theorem 26.5 in \cite{shalev2014understanding}]
  \label{theorem:helper}
 Let $\mathcal{F}$ be a compact set of hypotheses. For any $\delta \in ]0,1[$
 \begin{align*}
  \mathbb{P}_{\mathcal{S}}\Bigg(\forall f \in \mathcal{F}:
  \mathcal{L}(f) - \mathcal{L}_{emp}(f)
   \leq 2 R_{S}(L_{0}) + B \sqrt{\frac{2 log (\frac{4}{\delta})}{N}}\Bigg) \geq 1 - \delta,
\end{align*}
where $R_{S}(L_{0})$ is the Rademacher complexity of the set
\\$L_{0} := \{ (l(f(\x_1, y_1)),\dots,l(f(\x_N,y_N))\text{ } |\text{ } f \in
\mathcal{F}) \}$,
$B$ is an upper bound on $l(\text{ } . \text{ } , \text{ } . \text{ } )$ and
the probability is taken over the random samples $S$ of size $N$.
\end{theorem}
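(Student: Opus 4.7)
The plan is to apply Theorem~\ref{theorem:helper} as the starting point and reduce the argument to bounding the empirical Rademacher complexity $R_{S}(L_{0})$ of the loss class over $\mathcal{F}_{\theta_{0}, C_{TS}, \varepsilon}$. Using Assumption~\ref{ass:1} together with the Ledoux--Talagrand contraction inequality, I would pass from $R_{S}(L_{0})$ to $K_{\mathcal{L}}\, R_{S}(\mathcal{F}_{\theta_{0}, C_{TS}, \varepsilon})$ at the cost of a $K_{\mathcal{L}}$ factor; the PAC tail $B\sqrt{2\log(4/\delta)/N}$ then carries over verbatim from Theorem~\ref{theorem:helper}.

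The central step is to derive and exploit the identity previewed right after the theorem,
\[
f(\theta_T, \x) \;=\; f(\theta_0, \x) \;-\; \nabla_{\theta} f(\theta_0, \x)^T w_\theta \;+\; h(\theta_T, \x),
\]
with $w_\theta = \sum_{t=1}^{T-1} \eta(t)\, \frac{1}{n}\sum_{i=1}^{n} \frac{\partial l}{\partial f}(\theta_{t-1}, \x_i)\, S(\theta_{t-1}, \x_i)\, \x_i$. I would obtain it by telescoping $\theta_T - \theta_0 = -\sum_{t=1}^{T-1} \eta(t)\, \nabla_\theta \mathcal{L}_{emp}^{D}(\theta_{t-1})$, expanding $f(\cdot, \x)$ to first order in $\theta$ around $\theta_0$, and then expanding each inner factor $\nabla_\theta f(\theta_{t-1}, \x_i)$ to first order in the input around $\x = 0$. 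The convention $\mathrm{ReLU}'(0) = 0$ guarantees $\nabla_\theta f(\theta_{t-1}, 0) = 0$, so this second expansion produces precisely the Tangent Sensitivity term $S(\theta_{t-1}, \x_i)\,\x_i$, while all higher-order remainders in both $\theta$ and $\x$ are collected into $h$, whose sample supremum becomes $H(\theta)$.

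With the decomposition in hand I would bound $R_{S}(\mathcal{F}_{\theta_{0}, C_{TS}, \varepsilon})$ as a sum of three contributions corresponding to the three summands. The initial summand $f(\theta_0, \x)$ is $\theta$-independent and uniformly bounded by $K_{\theta_0}$ via Assumption~\ref{ass:3}; after contraction this accounts for the constant slack $K_{\mathcal{L}} K_{\theta_0}$. The middle summand $-\nabla_\theta f(\theta_0, \x)^T w_\theta$ is a norm-constrained linear predictor in the fixed feature $\x \mapsto \nabla_\theta f(\theta_0, \x)$, so the standard Rademacher bound for such classes yields at most $\|w_\theta\| \cdot \sup_\x \|\nabla_\theta f(\theta_0, \x)\|/\sqrt{N}$. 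The supremum is capped by $K_{\nabla_0}$ (Assumption~\ref{ass:3}), and $\|w_\theta\|$ is capped by $K_x C_{TS} C_{GD}$ via the triangle inequality combined with Assumption~\ref{ass:2}, Assumption~\ref{ass:3}, and the hypothesis on $\sum_t \eta(t)\frac{1}{n}\sum_i \frac{\partial l}{\partial f}(\theta_{t-1},\x_i)$. Multiplying by the factor $2$ from Theorem~\ref{theorem:helper} and by the contraction factor $K_{\mathcal{L}}$ produces precisely $C_1 = 2 K_{\mathcal{L}} K_x K_{\nabla_0} C_{TS} C_{GD}$. The third summand $h$ finally contributes the $K_{\mathcal{L}} H(\theta)$ term after contraction.

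The hardest part will be the closing claim that under properly scaled ReLU, $H(\theta) \to 0$ as width tends to infinity. Since ReLU is only piecewise smooth, the second-order Taylor remainders forming $h$ are well-behaved within a single activation pattern but pick up boundary contributions whenever a training input crosses an activation kink along the optimization trajectory. I would attack this via NTK-regime estimates: under NTK parametrization, $\|\theta_T - \theta_0\|$ stays of order one while the probability that any single unit flips activation over training scales like $1/\sqrt{m}$, where $m$ is the layer width, so summing over units and over the $T$ time steps yields a remainder that vanishes in the infinite-width limit, mirroring the lazy-training linearization of the NTK literature. Making this quantitative and uniform over all $f \in \mathcal{F}_{\theta_0, C_{TS}, \varepsilon}$ is the chief technical obstacle; the rest of the argument is essentially bookkeeping on the Rademacher tools already invoked in the statement.
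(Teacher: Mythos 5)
Your proposal does not prove the statement it was asked to prove. Theorem~\ref{theorem:helper} is the generic PAC bound via Rademacher complexity (Theorem 26.5 in \cite{shalev2014understanding}): it concerns an arbitrary compact hypothesis class $\mathcal{F}$ and makes no reference to ReLU networks, gradient descent, Tangent Sensitivity, or any of the paper's assumptions. Your very first sentence --- ``apply Theorem~\ref{theorem:helper} as the starting point'' --- assumes the statement you are supposed to establish and then goes on to sketch the proof of the paper's \emph{main} result, Theorem~\ref{theorem:main} (the Taylor expansion around $\theta_0$, the decomposition into $f(\theta_0,\x)$, the linear term $\nabla_\theta f(\theta_0,\x)^T w_\theta$, and the remainder $h$, followed by Lemma~\ref{lemma:linear} for the linear part). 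That material belongs to the proof of Theorem~\ref{theorem:main} and Proposition~\ref{prop:last}, not to the statement at hand. For the record, the paper itself does not reprove Theorem~\ref{theorem:helper} either; it cites \cite{shalev2014understanding}.

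If you do want to prove Theorem~\ref{theorem:helper} rather than cite it, the argument is entirely different from anything in your proposal: one first shows by symmetrization (introducing a ghost sample and Rademacher signs) that
$\mathbb{E}_{S}\bigl[\sup_{f\in\mathcal{F}}\bigl(\mathcal{L}(f)-\mathcal{L}_{emp}(f)\bigr)\bigr]\le 2\,\mathbb{E}_{S}\bigl[R_{S}(L_{0})\bigr]$,
then applies McDiarmid's bounded-differences inequality twice --- once to concentrate $\sup_{f}(\mathcal{L}(f)-\mathcal{L}_{emp}(f))$ around its expectation, and once to replace the expected Rademacher complexity by the empirical one $R_{S}(L_{0})$ --- each use of McDiarmid relying on the boundedness of $l$ by $B$ and each contributing a term of order $B\sqrt{\log(1/\delta)/N}$; the union bound over the two events is what produces the $4/\delta$ inside the logarithm. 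None of these ingredients (ghost sample, bounded differences, union bound) appears in your write-up, so as a proof of the stated theorem the proposal has a complete gap; as a proof of Theorem~\ref{theorem:main} it is broadly aligned with the paper's actual argument, but that was not the assignment.
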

Proof of Theorem \ref{theorem:helper} can be found in \cite{shalev2014understanding}. \\
First we take the Taylor approximation of $f(\theta, \x)$ around a random
initialization point $\theta_0$.
\begin{align*}
    f(\theta, \x) = f(\theta_0, \x) + \nabla_{\theta}f(\theta_0, \x)^T
    (\theta - \theta_0) + h(\theta, \x)
\end{align*}
where the approximation error $h(\theta, \x)
 = O((\theta - \theta_0)^T H_{\theta}f(\theta, \x)^T (\theta - \theta_0))$.
We know for wide networks and a suitably scaled ReLU activation
function the norm of the Hessian tends to zero as the width tends to infinity
 \cite{liu2022loss} in $B_{\varepsilon}(\theta_0)$,
 hence for a wide enough network the error term $h$ is sufficiently small.
 The radius $\varepsilon$ is proportional to the smallest eigenvalue of the NTK
 Gram matrix over the training data.
 Now consider the vanilla gradient descent for an $L_{\mathcal{L}}$-Lipschitz loss
  function of the form
 $\mathcal{L}_{emp}^{D}(\theta) = \frac{1}{n}\sum\limits_{i=1}^{n} l(f(\theta, \x_i), y_i)$ for a fixed set of
 training data. Note, that
  $\nabla_{\theta}\mathcal{L}_{emp}^{D}(\theta) = \frac{1}{n}\sum\limits_{i=1}^n \frac{\partial l}{\partial f}
  \nabla_{\theta}f(\theta, \x_i)$. According to the GD update rule
  \begin{align*}
      \theta_{T} = \theta_{T - 1} - \eta(T) \nabla_{\theta}\mathcal{L}_{emp}^{D}(\theta_{T - 1})
  \end{align*}
  where $\eta(T)$ is the learning rate at time $T$. 
  
  \begin{lemma}
    \label{lemma:relu}
    For a biasless ReLU network
    $\nabla_{\theta}f(\theta, \x) = S(\theta, \x) \x$.
  \end{lemma}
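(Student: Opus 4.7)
The plan is to exploit the positive homogeneity of the ReLU activation together with the absence of biases, which makes the network function positively homogeneous of degree one in the input $\x$, and then apply Euler's identity for homogeneous functions.

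First I would show that for a biasless ReLU network, $f(\theta, c\x) = c\, f(\theta, \x)$ for every $c > 0$. This follows by induction on the depth: the pre-activation at layer $\ell$ is $W_\ell h_{\ell-1}$, which scales linearly with the input, and since $\mathrm{ReLU}(cz) = c\,\mathrm{ReLU}(z)$ for $c > 0$, each hidden representation scales by $c$, and so does the output.

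Next, differentiating both sides of $f(\theta, c\x) = c\, f(\theta, \x)$ with respect to $\theta$ gives
\begin{align*}
 \nabla_\theta f(\theta, c\x) = c\, \nabla_\theta f(\theta, \x),
\end{align*}
so the vector-valued map $\x \mapsto \nabla_\theta f(\theta, \x)$ is itself positively homogeneous of degree one in $\x$. By Euler's theorem applied componentwise, any such map satisfies $g(\x) = \tfrac{\partial g}{\partial \x}(\x)\, \x$, where $\tfrac{\partial g}{\partial \x}$ is the Jacobian with respect to $\x$. In our setting this Jacobian is exactly the $P \times n_{in}$ matrix
\begin{align*}
 \frac{\partial \nabla_\theta f}{\partial \x}(\theta, \x) = S(\theta, \x),
\end{align*}
by the definition of Tangent Sample Sensitivity, which yields $\nabla_\theta f(\theta, \x) = S(\theta, \x)\, \x$.

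The main obstacle is the non-smoothness of ReLU on the set where some pre-activation vanishes, so Euler's identity cannot be invoked naively as a pointwise calculus statement. I would handle this by noting that, with the convention $\mathrm{ReLU}'(0) = 0$, the map $\x \mapsto \nabla_\theta f(\theta, \x)$ is piecewise linear in $\x$: on the complement of a finite union of hyperplanes (corresponding to activation-pattern boundaries), both $\nabla_\theta f(\theta, \x)$ and $S(\theta, \x)\x$ are smooth and the classical Euler identity applies. Because both sides are continuous piecewise-linear functions that agree on an open dense set, they agree everywhere, giving the stated equality in full generality (under the chosen convention at kinks).
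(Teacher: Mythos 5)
Your argument is correct in substance but takes a genuinely different route from the paper. The paper proves the lemma combinatorially: it expands $\partial f/\partial\theta_i$ as a sum over active paths through $\theta_i$, notes that each term is the input coordinate at the path's source times the product of the remaining weights on the path, and reads off that the resulting expression is exactly $S_i\x$ with $S_i$ the $i$-th row of the tangent sensitivity. You instead use degree-one positive homogeneity of a biasless ReLU network in its input, differentiate in $\theta$ to transfer that homogeneity to the gradient map, and apply Euler's identity componentwise. Both arguments ultimately rest on the same fact --- for a fixed activation pattern, $\nabla_\theta f(\theta,\cdot)$ is linear in $\x$ with no constant term --- but yours is shorter, avoids path bookkeeping, and extends verbatim to any positively homogeneous activation, while the paper's computation is more explicit and exhibits the entries of $S$ directly. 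One caveat on your final step: the claim that both sides are \emph{continuous} piecewise-linear functions of $\x$ is false in general. For $f=w_3\,\mathrm{ReLU}(w_1x_1+w_2x_2)$ the component $\partial f/\partial w_1=w_3\,\mathrm{ReLU}'(w_1x_1+w_2x_2)\,x_1$ jumps across the hyperplane $w_1x_1+w_2x_2=0$, so the density-plus-continuity argument does not push the identity onto activation boundaries. On that measure-zero set $S(\theta,\x)$ does not exist as a classical derivative anyway, so the identity is only meaningful off it --- which is also where the paper's own active-path computation (with the convention $\mathrm{ReLU}'(0)=0$) implicitly lives. You should state the conclusion for inputs with a locally constant activation pattern rather than claim it ``in full generality.''
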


  \begin{proof}
    Let us fix an input vector $\x \in \mathbb{R}^{n_{in}}$ and a parameter
    vector $\theta \in \mathbb{R}^P$. We say that a path $p$ in the network
    graph is active if all the nodes in the path are active, meaning that the
    preactivation of every node is positive. The notation $p: x_j
    \rightarrow$ means that $p$ is a path starting at the $j$-th input neuron.
    We do not specify the output neuron as we are in the binary classification
    setting, where $n_{out} = 1$. Let $S_{i}$ be the $i$-th row of $S(\theta,
    \x)$.
    Since we fixed the input and parameter
    vectors, we will omit the dependence of them in the notation of the
    derivatives (as we did in case of $S_i$).
    
     The $i$-th coordinate of the LHS is
     \begin{align*}
      \frac{\partial f}{\partial \theta_i}  = 
      \sum\limits_{\substack{x_{j}: \text{input} \\
      \text{node}}}\sum\limits_{\substack{p: x_{j} \rightarrow \\
      \text{active path,} \\ \theta_i \in p}}\x_{j}\prod\limits_{\substack{\theta_{p} \in p \\
      \theta_p \neq \theta_i}}\theta_{p} = 
      \sum\limits_{\substack{x_{j}: \text{input} \\
      \text{node}}}\Big(\sum\limits_{\substack{p: x_{j} \rightarrow \\
      \text{active path,} \\ \theta_i \in p}}\prod\limits_{\substack{\theta_{p} \in p \\
      \theta_p \neq \theta_i}}\theta_{p} \Big ) \x_j = S_i \x
     \end{align*}
  \end{proof}

  Let us fix an initial parameter $\theta_0$ and a positive radius $\varepsilon$
   and let $U_{\theta_0, \varepsilon} = Traj(\theta_0)
    \cap B_{\varepsilon}(\theta_0)$. For any $\theta
   \in U_{\theta_0, \varepsilon}$ there is a $T$ such that
    $\theta = GD(\theta_0, \eta, T)$ for some
   $\eta$. Under these circumstances we may emphasize the relationship between
   $\eta$, $\theta$ and $T$ by the notations $\theta = \theta_{T}$ and 
   $\eta = \eta_{\theta_T}$. Using the GD update rule for such $\theta_T$ and
   the Taylor approximation of $f$ we obtain
   \begin{equation}
    \label{eq:1}
    \begin{aligned}
        f(\theta_T, \x) &= f(\theta_0, \x) - \nabla_{\theta}f(\theta_0, \x)^T
        \left (\sum\limits_{t = 1}^{T - 1}\eta(t)
         \nabla_{\theta}\mathcal{L}_{emp}^{D}(\theta_{t - 1})\right )
         + h(\theta_T, \x) \\
         &= f(\theta_0, \x) - \nabla_{\theta}f(\theta_0, \x)^T
        \left (\sum\limits_{t = 1}^{T - 1}\eta(t)
         \frac{1}{n}\sum\limits_{i=1}^{n}\frac{\partial l}{\partial f}
          \nabla_{\theta}f(\theta_{t - 1}, \x_i) \right )
         + h(\theta_T, \x) \\
         &= f(\theta_0, \x) - \nabla_{\theta}f(\theta_0, \x)^T
        \left (\sum\limits_{t = 1}^{T - 1}\eta(t)
         \frac{1}{n}\sum\limits_{i=1}^{n}\frac{\partial l}{\partial f}
          S(\theta_{t - 1}, \x_i)\x_i \right )
         + h(\theta_T, \x).
    \end{aligned}
   \end{equation}

The last equality follows from Lemma \ref{lemma:relu}. 
Note, that the first term and $\nabla_{\theta}f(\theta_0, \x)$ depend only on $\theta_0$.
 In order to use 
Theorem \ref{theorem:helper} we need to upper bound the
Rademacher complexity of \\
 $\left \{ (l(f(\theta, z_1)), \dots, l(f(\theta, z_N)))^T \middle |
 f(\theta, .) \in \mathcal{F}_{\theta_0, C_{TS}, \varepsilon}\right \}$ 
 for which it is enough to upper bound \\
$R_{S}(\left \{ (f(\theta, z_1), \dots, f(\theta, z_N))^T \middle |
 f(\theta, .) \in \mathcal{F}_{\theta_0, C_{TS}, \varepsilon}\right \})$ by Lemma 26.9 in
\cite{shalev2014understanding} and Assumption \ref{ass:1}. Here we denoted the
elements of the random sample $S$ from Theroem \ref{theorem:main}
by $\z_1,\dots,\z_N$ and incorporated the labels into the loss function $l$.
 Using the notation
 \\ $R_{S}(\mathcal{F}_{\theta_0, C_{TS},
 \varepsilon})
 = R_{S}(\left \{ (f(\theta, z_1), \dots, f(\theta, z_N))^T \middle |
 f(\theta, .) \in \mathcal{F}_{\theta_0, C_{TS}, \varepsilon}\right \})$,
Theorem \ref{theorem:main} immediately follows from the following proposition.

\begin{proposition}
  \label{prop:last}
  Under Assumptions \ref{ass:1}-\ref{ass:3} and the additional assumptions
  stated in Theorem \ref{theorem:main} we have
  \begin{align*}
      R_{S}(\mathcal{F}_{\theta_0, C_{TS}, \varepsilon})
       &\leq K_{\mathcal{L}}R_{S}(\left \{ (f(\theta_0, z_1) \dots f(\theta_0, z_M))^T \right \})
       + K_{\mathcal{L}}\frac{K_x K_{\nabla_0}C_{TS} C_{GD}}{\sqrt{N}}
       +  \\
       &+ K_{\mathcal{L}}R_{S}(\left \{ (h(\theta, z_1), \dots, h(\theta, z_M))^T \middle |
       f(\theta, .) \in \mathcal{F}_{\theta_0, C_{TS}, \varepsilon} \right \}).
  \end{align*}
\end{proposition}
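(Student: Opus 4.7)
The plan is to use equation (\ref{eq:1}) to decompose each $f(\theta, \cdot) \in \mathcal{F}_{\theta_0, C_{TS}, \varepsilon}$ as a sum of three pieces: the constant-in-$\theta$ term $f(\theta_0, \cdot)$, a linear-in-features term $-\nabla_{\theta} f(\theta_0, \cdot)^T w_\theta$ with coefficient vector
\[
 w_\theta \;=\; \sum_{t=1}^{T-1} \eta(t)\, \frac{1}{n}\sum_{i=1}^n \frac{\partial l}{\partial f}(\theta_{t-1}, \x_i)\, S(\theta_{t-1}, \x_i)\, \x_i ,
\]
and the Taylor remainder $h(\theta, \cdot)$. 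Subadditivity of Rademacher complexity on classes of pointwise sums splits $R_S(\mathcal{F}_{\theta_0, C_{TS}, \varepsilon})$ into the three corresponding Rademacher terms; the first and third already match the first and third summands in the conclusion. The global factor $K_{\mathcal{L}}$ is introduced by a single application of the contraction lemma (Lemma~26.9 in \cite{shalev2014understanding}) together with Assumption~\ref{ass:1}, which is what allows the transition from $R_S$ of $\mathcal{F}_{\theta_0, C_{TS}, \varepsilon}$ to $R_S$ of the associated loss class used in Theorem~\ref{theorem:helper}, and distributes $K_{\mathcal{L}}$ across the three summands.

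The main content is therefore controlling the middle term, namely the Rademacher complexity of the linear class $\{\x \mapsto \langle w_\theta,\, \nabla_{\theta} f(\theta_0, \x)\rangle : \theta \in U_{\theta_0,\varepsilon}\}$. First I would bound $\|w_\theta\|$ uniformly in $\theta \in U_{\theta_0, \varepsilon}$ by pushing the norm inside both sums: Lemma~\ref{lemma:relu} rewrites $\nabla_\theta f = S(\theta, \x)\,\x$, so $\|S(\theta_{t-1}, \x_i)\,\x_i\| \le \|S(\theta_{t-1}, \x_i)\|_F\, \|\x_i\| \le C_{TS} K_x$ by Assumptions~\ref{ass:2} and \ref{ass:3}, and the extra hypothesis of Theorem~\ref{theorem:main} bounds the remaining scalar factor $\sum_t \eta(t) \frac{1}{n}\sum_i \partial l/\partial f$ by $C_{GD}$, giving $\|w_\theta\| \le K_x C_{TS} C_{GD}$. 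Combined with the feature bound $\|\nabla_\theta f(\theta_0, \x)\| \le K_{\nabla_0}$ from Assumption~\ref{ass:3}, the classical Cauchy--Schwarz / Jensen bound for linear function classes yields
\[
 R_S\!\left(\left\{\x \mapsto \langle w_\theta, \nabla_\theta f(\theta_0, \x)\rangle \right\}\right) \;\le\; \frac{K_x\, K_{\nabla_0}\, C_{TS}\, C_{GD}}{\sqrt{N}},
\]
which, once multiplied by $K_{\mathcal{L}}$, is exactly the middle summand of the proposition.

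The main obstacle I anticipate is the uniformity of the $\|w_\theta\|$ estimate: the vector $w_\theta$ depends on the entire trajectory $\theta_0, \ldots, \theta_{T-1}$, and the intermediate iterates need not all lie in $U_{\theta_0, \varepsilon}$, so Assumption~\ref{ass:2} must be invoked along the whole path (or one must interpret $Traj(\theta_0)$ as closed under truncation so that every prefix inherits the Tangent Sensitivity bound). A subsidiary technicality is that the $C_{GD}$ hypothesis as stated controls a \emph{signed} quantity, whereas the bound on $\|w_\theta\|$ really wants the sum with absolute values; this can be handled either by strengthening the hypothesis to $\sum_t \eta(t) \frac{1}{n}\sum_i |\partial l/\partial f| \le C_{GD}$ or by a termwise triangle-inequality argument. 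Once these points are settled, the remaining steps are a routine invocation of the contraction lemma and the standard linear Rademacher estimate.
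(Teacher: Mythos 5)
Your proof follows essentially the same route as the paper's: the same three-term decomposition from equation (\ref{eq:1}), subadditivity of the Rademacher complexity over pointwise sums, the contraction lemma for the $K_{\mathcal{L}}$ factor, and the linear-class bound (Lemma \ref{lemma:linear}) applied with $\vv_i = \nabla_\theta f(\theta_0, \z_i)$ and the same estimate $\norm{w_\theta} \leq C_{TS} K_x C_{GD}$. The two technicalities you flag --- that the $C_{GD}$ hypothesis controls a signed sum while the triangle inequality requires absolute values, and that the Tangent Sensitivity bound must hold at every intermediate iterate $\theta_{t-1}$ of the trajectory --- are genuine, and the paper's own proof silently makes the same implicit assumptions.
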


\noindent
Before starting the proof we need another lemma.

\begin{lemma}[Lemma 26.10 from \cite{shalev2014understanding}]
  \label{lemma:linear}
  Let $\vv_1,\dots,\vv_m$ be vectors in a Hilbert space. Define
  $S' = \left\{ (\langle \mathbf{w}, \vv_1 \rangle,\dots,\langle \mathbf{w}, \vv_m
  \rangle)\text{ } \middle| \text{ } \norm{\mathbf{w}}_{2} \leq 1\right\}$.
  Then,
  \begin{align*}
    R(S') \leq \frac{\max_i \norm{\vv_i}_2}{\sqrt{m}}
  \end{align*}
\end{lemma}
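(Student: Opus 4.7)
The plan is to first invoke Talagrand's contraction lemma (Lemma 26.9 of \cite{shalev2014understanding}), applied to the $K_{\mathcal{L}}$-Lipschitz loss guaranteed by Assumption \ref{ass:1}. This produces the overall factor $K_{\mathcal{L}}$ on the right-hand side of every summand and reduces the task to bounding the Rademacher complexity of the network-output class. Then I would plug in the decomposition from equation (\ref{eq:1}): for every $\theta \in U_{\theta_0, \varepsilon}$, the vector $(f(\theta, \z_1), \dots, f(\theta, \z_N))$ splits as a sum of (i) the parameter-free initial-output vector $(f(\theta_0, \z_j))_j$, (ii) a linear-in-$w_{\theta}$ vector whose $j$-th coordinate is $\langle -w_{\theta}, \nabla_{\theta} f(\theta_0, \z_j) \rangle$, and (iii) the Taylor-remainder vector $(h(\theta, \z_j))_j$. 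Subadditivity of the supremum inside the Rademacher expectation (i.e.\ $\sup_{a,b,c}(a+b+c) \leq \sup a + \sup b + \sup c$) immediately separates the single complexity into exactly the three pieces listed in the proposition.

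The only nontrivial piece is the middle, linear one. The plan is to bound $\norm{w_{\theta}}_2$ by pushing the norm inside the nested sum defining $w_{\theta}$ and using $\norm{S(\theta_{t-1}, \x_i)\x_i}_2 \leq \norm{S(\theta_{t-1}, \x_i)}_F \norm{\x_i}_2$. Assumption \ref{ass:2} supplies $\norm{S(\theta_{t-1}, \x_i)}_F \leq C_{TS}$ along the trajectory, Assumption \ref{ass:3} gives $\norm{\x_i}_2 \leq K_x$, and what remains is the scalar sum $\sum_{t} \eta(t) \frac{1}{n}\sum_i \frac{\partial l}{\partial f}(\theta_{t-1}, \x_i)$, controlled by $C_{GD}$ under the hypothesis stated in Theorem \ref{theorem:main}. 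This yields $\norm{w_{\theta}}_2 \leq K_x C_{TS} C_{GD}$. The linear class can then be rescaled to fit the unit-ball form of Lemma \ref{lemma:linear}, taking $\vv_j = \nabla_{\theta} f(\theta_0, \z_j)$ whose norms are controlled by $K_{\nabla_0}$ via Assumption \ref{ass:3}. The lemma delivers a bound of $\frac{K_x K_{\nabla_0} C_{TS} C_{GD}}{\sqrt{N}}$, matching the middle term of the proposition after multiplication by $K_{\mathcal{L}}$.

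For the two remaining pieces, the first class contains a single deterministic vector depending only on $\theta_0$ (no supremum), so its Rademacher complexity is kept as $R_S(\{(f(\theta_0, \z_1), \dots, f(\theta_0, \z_N))^T\})$ and later absorbed into the $K_{\mathcal{L}} K_{\theta_0}$ term of the main theorem via Assumption \ref{ass:3}. The third piece is simply retained; it becomes $H(\theta)$ in Theorem \ref{theorem:main}, and the paper argues separately that under suitably scaled ReLU activations the Hessian norm vanishes in the wide-width limit so that $H(\theta) \to 0$.

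The main obstacle is the bound on $\norm{w_{\theta}}_2$: the $C_{GD}$ hypothesis is stated without absolute values around $\frac{\partial l}{\partial f}$, whereas the triangle inequality naturally produces $\sum_t \eta(t)\frac{1}{n}\sum_i |\frac{\partial l}{\partial f}(\theta_{t-1}, \x_i)|$. A careful reading is needed here — either one assumes consistent sign of the loss derivative along the optimization path (which holds for e.g.\ a loss like $\ell(u) = \max(0, 1 - u)$ or a convex $\ell$ on one side of the minimum), or one reinterprets the assumption with absolute values. Beyond this sign subtlety, the rest of the proof is a textbook composition of subadditivity, Talagrand contraction, and the linear-class lemma applied to the already-established Taylor-plus-gradient-descent decomposition.
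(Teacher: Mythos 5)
Your proposal does not prove the statement it was asked to prove. The statement is Lemma \ref{lemma:linear} itself --- the classical bound $R(S') \leq \max_i \norm{\vv_i}_2/\sqrt{m}$ on the Rademacher complexity of a linear class over the unit ball in a Hilbert space. What you have written is instead an outline of the proof of Proposition \ref{prop:last} (the decomposition of $R_S(\mathcal{F}_{\theta_0,C_{TS},\varepsilon})$ via Talagrand contraction, the Taylor/gradient-descent splitting, and the bound on $\norm{\mathbf{w}_\theta}$), and in that outline you explicitly \emph{invoke} Lemma \ref{lemma:linear} as a black box ("the linear class can then be rescaled to fit the unit-ball form of Lemma \ref{lemma:linear}"). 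So the target lemma is never established; it is assumed. In the paper this lemma is imported by citation from \cite{shalev2014understanding} rather than reproved, but a blind proof attempt of the lemma should still supply its (short, standard) argument rather than the argument of a downstream proposition.

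For the record, the missing proof is three lines. By linearity, $\sup_{\norm{\mathbf{w}}_2\le 1}\frac{1}{m}\sum_{i}\sigma_i\langle \mathbf{w},\vv_i\rangle = \frac{1}{m}\sup_{\norm{\mathbf{w}}_2\le 1}\langle \mathbf{w},\sum_i\sigma_i\vv_i\rangle = \frac{1}{m}\norm{\sum_i\sigma_i\vv_i}_2$ by duality of the Hilbert norm. Taking expectation over $\sigma$ and applying Jensen's inequality,
\begin{align*}
R(S') = \frac{1}{m}\mathbb{E}_\sigma\left[\norm{\textstyle\sum_i\sigma_i\vv_i}_2\right]
\leq \frac{1}{m}\left(\mathbb{E}_\sigma\left[\norm{\textstyle\sum_i\sigma_i\vv_i}_2^2\right]\right)^{1/2}
= \frac{1}{m}\left(\textstyle\sum_i\norm{\vv_i}_2^2\right)^{1/2}
\leq \frac{\max_i\norm{\vv_i}_2}{\sqrt{m}},
\end{align*}
where the middle equality uses $\mathbb{E}[\sigma_i\sigma_j]=0$ for $i\neq j$ and $\sigma_i^2=1$. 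Your observation about the sign subtlety in the $C_{GD}$ hypothesis (absolute values around $\partial l/\partial f$) is a legitimate criticism, but it pertains to the proof of Proposition \ref{prop:last}, not to this lemma.
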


\noindent
Note that by omitting the condition $\norm{\mathbf{w}}_{2} \leq 1$ the lemma
remains true with the modification of the upper bound multiplied by $\norm{\mathbf{w}}_{2}$.

\begin{proof}{\textbf{of Proposition \ref{prop:last}}}
  
\noindent
We remind the reader that the training dataset is considered
constant and is denoted by \\ $D = \{\x_1,\dots,\x_n\}$.
The three terms in the RHS of Proposition \ref{prop:last} respectively
originates from the three terms in equation \ref{eq:1} together with the fact
that for a set of the form $T = \left\{ A(\theta) + B(\theta) \text{ }
\middle| \text{ } \theta \in \Theta \right\}$ the Rademacher complexity is upper
bounded by 
\\$R_{S}(T) \leq R_{S}(\left\{ A(\theta) \text{ }
\middle| \text{ } \theta \in \Theta \right\})
 + R_{S}(\left\{ B(\theta) \text{ }
\middle| \text{ } \theta \in \Theta \right\})$.

The first and third terms are straightforward. The second term
comes from the application of Lemma \ref{lemma:linear} along with the roles
 $\vv_i = \nabla_{\theta}f(\theta_0, \z_i)$ and \\
  $\mathbf{w}_{\theta} = \frac{1}{n}\sum\limits_{t = 1}^{T - 1}\eta(t)
    \sum\limits_{i=1}^{n}\frac{\partial l}{\partial f}(\theta_{t-1}, \x_i)
\nabla_{\theta}f(\theta_{t - 1}, \x_i)$.
Assuming that the Tangent Sensitivity is bounded for every input along the
optimization path by $C_{TS}$ (Assumption \ref{ass:2})
 and the input is also bounded by $K_x$
(Assumption \ref{ass:3}), we have
\begin{align*}
\norm{\mathbf{w}_{\theta}} \leq C_{TS}K_{x} \sum\limits_{t=1}^{T-1}\eta(t)
\frac{1}{n}\sum\limits_{i=1}^{n}\frac{\partial l}{\partial f}(\theta_{t-1}, \x_i)
\leq C_{TS}K_{x}C_{GD},
\end{align*}
where we used the condition in Theorem \ref{theorem:main}.
\end{proof}
The term $H(\theta)$ in the Theorem is any upper bound on the Rademacher
complexity of \\
$R_{S}(\left \{ (h(\theta, z_1), \dots, h(\theta, z_M))^T \middle |
       f(\theta, .) \in \mathcal{F}_{\theta_0, C_{TS}, \varepsilon} \right \})$
. Additionally, we have \\
$R_{S}(\left \{ (f(\theta_0, z_1) \dots f(\theta_0, z_M))^T \right \}) \leq
K_{\theta_0}$ and $\norm{\vv_i} = \norm{\nabla_{\theta} f(\theta_0, \z_i)} \leq
K_{\nabla_0}$, hence the Theorem follows.

\section{Experiments}

\label{appendix:C}

We performed experiments to verify the correlation between the norm of Tangent
Sensitivity and the empirical generalization gap. We considered the task of
image classification on the well-known MNIST and CIFAR-10 datasets.
In case of the MNIST problem we trained a fully connected ReLU network,
 a Multi-layer Percpetron (MLP) of the layer structure
(768, 3000, 3000, 3000, 10). We optimized the network for squared loss with
Adam optimizer. The resulted correlation
  between the empirical generalization gap and the
average norm of the Tangent Sensitivity on the test dataset can be seen in 
Fig.~\ref{fig:fig2}. Note, as in Fig.~\ref{fig:wide} on all figures the norm 
values of tangent sensitivity were linearyly scaled for presentational purposes. 
Typically, a division by a constant with magnitude $10^5$ was enough. The class
gaps are computed by considering the $i$-th output neuron and the binary
classificaiton problem for the $i$-th class.

\begin{figure}
\centering
\includegraphics[width=0.7\textwidth]{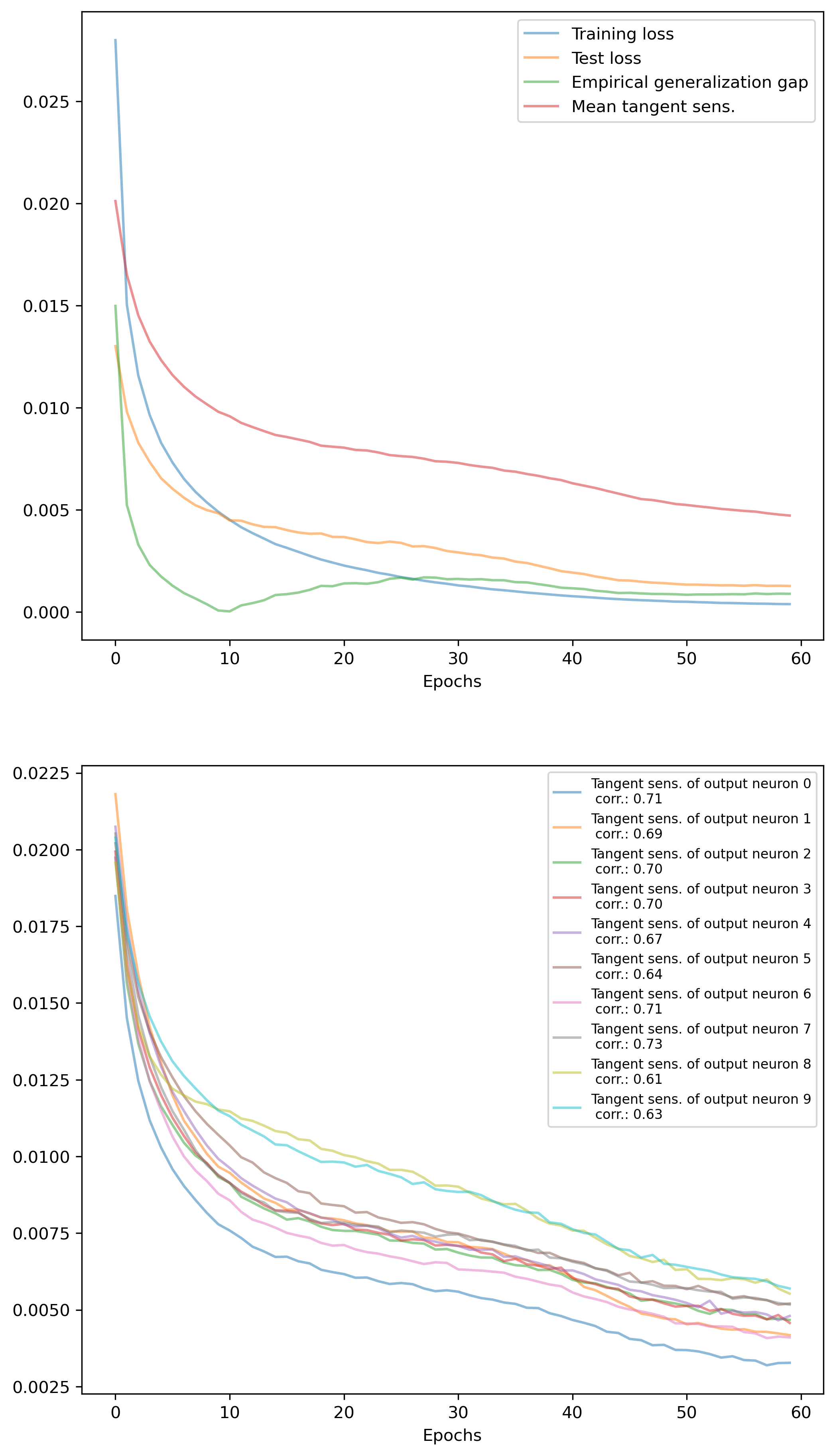}
\caption{Correlation between the empirical generalization gap and the
average norm of the Tangent Sensitivity on the test dataset
for a $3 \times 3000$-wide fully connected ReLU trained on MNIST.}
\label{fig:fig2}
\end{figure}

As for the CIFAR-10 dataset, we trained similarly structured MLPs where the three hidden layers have widths of
100, 700, 1500 and 3000. Besides the results of 3000-width network in Fig.~\ref{fig:wide}, the rest of the curves are shown in Fig.~\ref{fig:fig3}, Fig.~\ref{fig:fig4}, and Fig.~\ref{fig:fig5}. 

\begin{figure}
\centering
\includegraphics[width=0.8\textwidth]{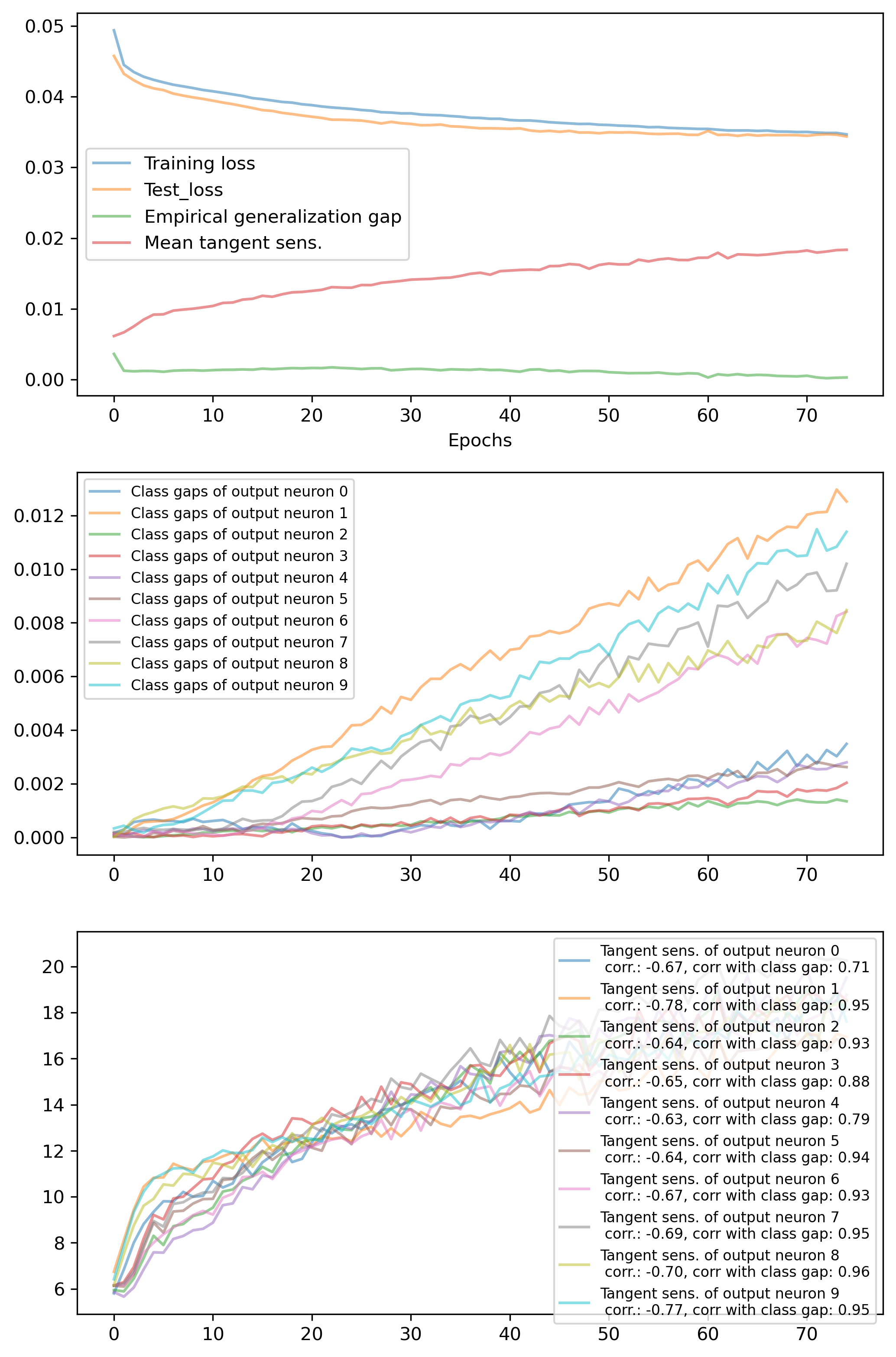}
\caption{Correlation between the empirical generalization gap and the
average norm of the Tangent Sensitivity on the test dataset
for a $3 \times 100$-wide fully connected ReLU trained on CIFAR-10.}
\label{fig:fig3}
\end{figure}

\begin{figure}
\centering
\includegraphics[width=0.8\textwidth]{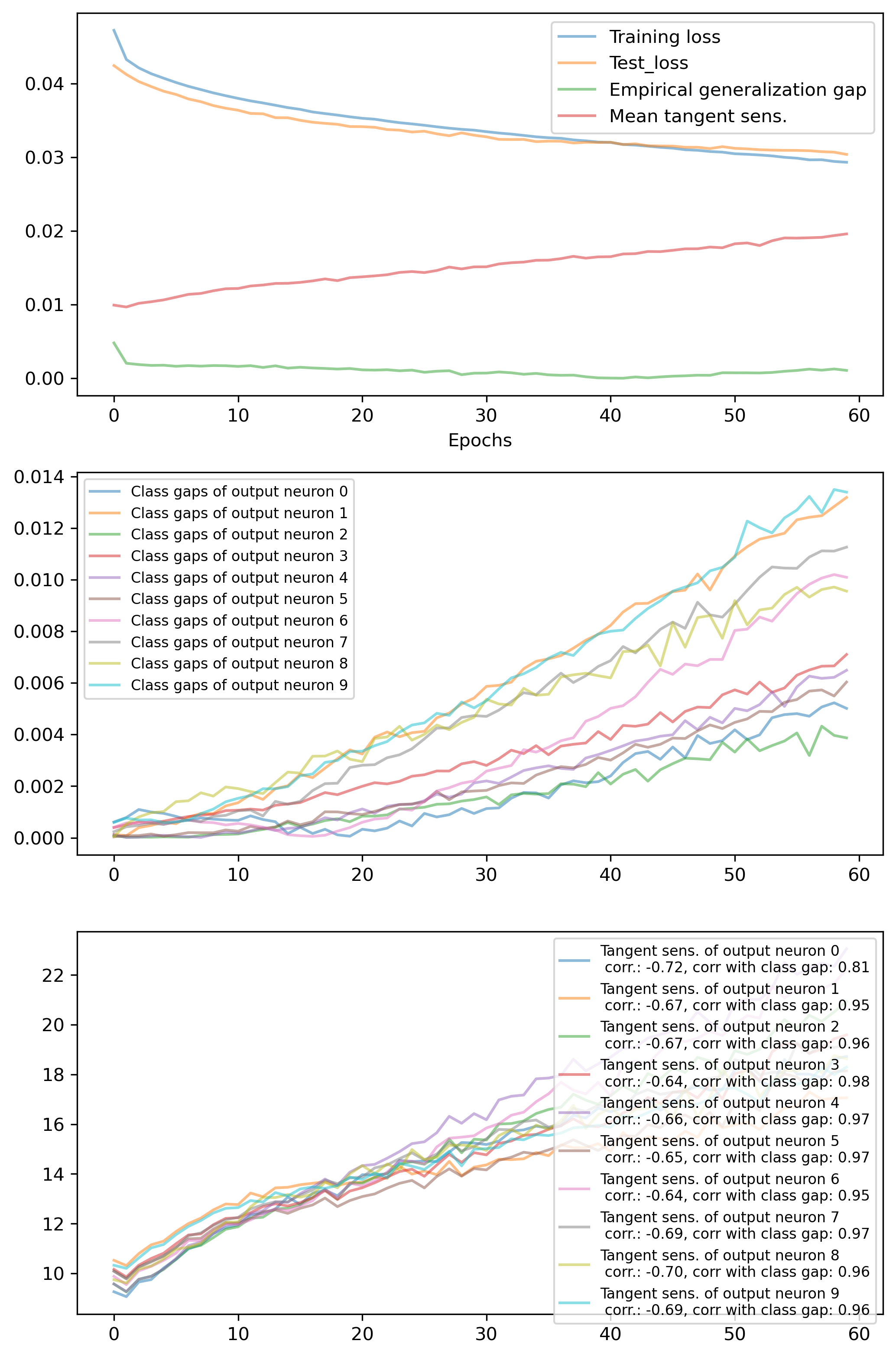}
\caption{Correlation between the empirical generalization gap and the
average norm of the Tangent Sensitivity on the test dataset
for a $3 \times 700$-wide fully connected ReLU trained on CIFAR-10.}
\label{fig:fig4}
\end{figure}

\begin{figure}
\centering
\includegraphics[width=0.8\textwidth]{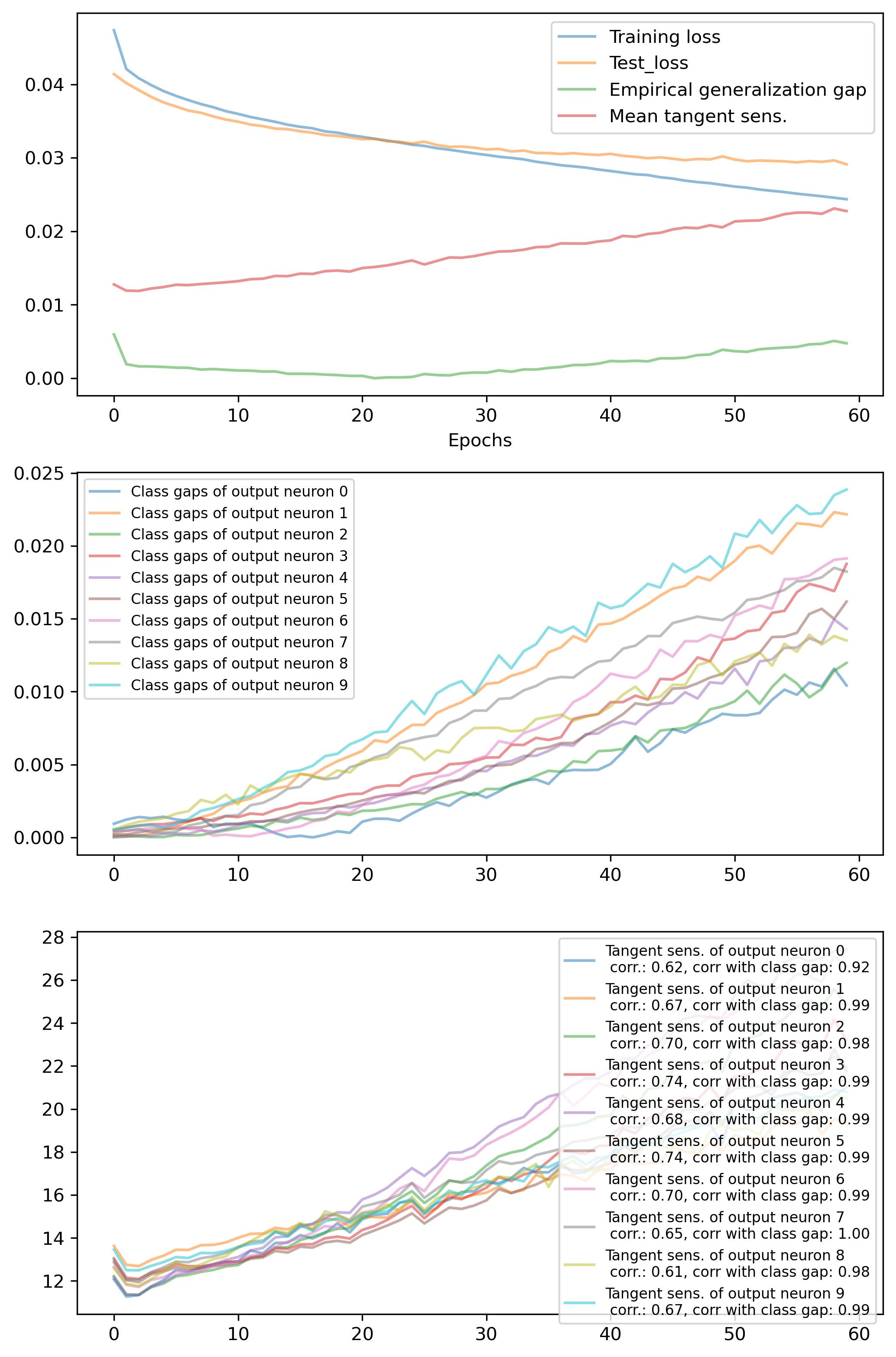}
\caption{Correlation between the empirical generalization gap and the
average norm of the Tangent Sensitivity on the test dataset
for a $3 \times 1500$-wide fully connected ReLU trained on CIFAR-10.}
\label{fig:fig5}
\end{figure}

Interestingly, in case of wide networks (3000-wide) the tangent sensitivity of different output neurons correlate with the mean generalization gap while if the network has lower width this correlation remains but only for matching per class output neuron and per class loss.  

\end{document}